\newtheorem{theorem}{Theorem}
\newcommand{\Retrace}{ReTrace}
\title{Robust SDE Parameter Estimation Under Missing Time Information Setting}
\author{
    Long Van Tran,
    Truyen Tran,
    Phuoc Nguyen
}
\begin{document}

\maketitle

\begin{abstract}
Recent advances in stochastic differential equations (SDEs) have enabled robust modeling of real-world dynamical processes across diverse domains, such as finance, health, and systems biology. However, parameter estimation for SDEs typically relies on accurately timestamped observational sequences. When temporal ordering information is corrupted, missing, or deliberately hidden (e.g., for privacy), existing estimation methods often fail. In this paper, we investigate the conditions under which temporal order can be recovered and introduce a novel framework that simultaneously reconstructs temporal information and estimates SDE parameters. Our approach exploits asymmetries between forward and backward processes, deriving a score-matching criterion to infer the correct temporal order between pairs of observations. We then recover the total order via a sorting procedure and estimate SDE parameters from the reconstructed sequence using maximum likelihood. Finally, we conduct extensive experiments on synthetic and real-world datasets to demonstrate the effectiveness of our method, extending parameter estimation to settings with missing temporal order and broadening applicability in sensitive domains.
\end{abstract}



\section{Introduction}
Stochastic differential equations (SDEs) are fundamental tools for modeling dynamical systems under uncertainty across diverse domains such as finance, health, and biology~\cite{pavliotis2014stochastic, qian2021integrating, kacprzyk2024odediscovery}. The Ornstein--Uhlenbeck (OU) process, a linear Gaussian SDE, is particularly notable for its analytical tractability and broad applicability~\cite{gardiner2004handbook}. 

\begin{figure}[ht]
    \centering
    \includegraphics[width=230pt]{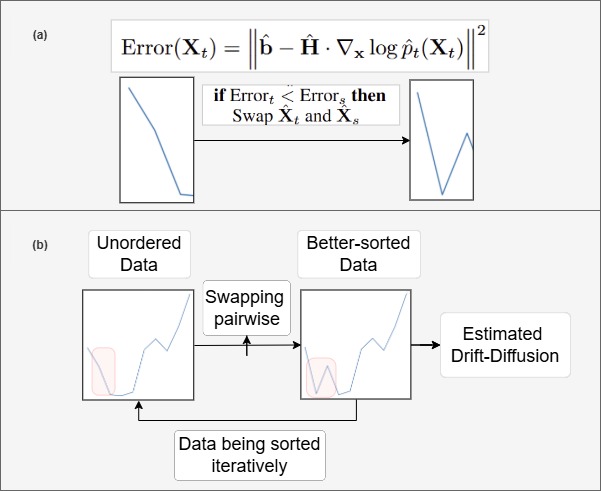}
    \caption{Our sorting procedure leverages drift-score discrepancy to reorder data. (a) Compare errors for each states pair in the Data Reordering stage. (b) Alternating between sorting data and estimating parameters.}\label{method_overview}
\end{figure}

Traditional parameter estimation techniques for SDEs, e.g. maximum likelihood estimation (MLE), crucially depend on access to temporally ordered data~\cite{sarkka2019applied, guan2024identifyingdriftdiffusioncausal}. However, in many real-world scenarios, the temporal order is corrupted, incomplete, or intentionally removed. This challenge is especially crucial in domains such as healthcare and finance, where privacy concerns, data anonymization, or system errors can lead to datasets with missing or inaccurate timestamps. In such cases, the loss of temporal information renders standard inference methods unreliable, necessitating new approaches that can recover both the time order and the underlying SDE parameters.

Existing work primarily focused on trajectory inference from population snapshots to reconstruct latent dynamics when time is observed but individual trajectories are untracked~\cite{guan2024identifyingdriftdiffusioncausal, gu2025partially}. However, these approaches assume at least partial temporal information is available and do not address settings where the timestamp of each observation is unknown or incorrect. As a result, when observations are unordered, neither of these methods guarantee accurate parameter estimation or trajectory reconstruction.

This issue is particularly prominent in the analysis of electronic medical records (EMR), where timestamps are often subject to recording delays, asynchronous updates, and missingness~\cite{nguyen2016deepr, pham2017predicting}. Thereby, patient histories extracted from EMR commonly contain irregular, noisy, or misordered event sequences~\cite{johnson2016mimic}. Inference of interventional SDEs from such data becomes fundamentally non-identifiable: causal effect estimates or post-intervention predictions may be reversed, potentially leading to invalid or even harmful conclusions, e.g., wrongly inferring that a treatment is detrimental when it is beneficial.

In this work, we address the challenge of \emph{jointly recovering temporal order and estimating SDE parameters}  from unordered observations. We analyze the identifiability of time direction in time-homogeneous linear SDEs, and propose a new framework (Fig. \ref{method_overview}) that exploits statistical asymmetries between forward and backward dynamics via a pairwise score-matching criterion. From this, we aggregate the local temporal direction to reconstruct the global order via a sorting algorithm. We then apply maximum likelihood estimation to the reconstructed data for parameter inference. We validated our framework on both synthetic and real-world datasets and showed reliable temporal order recovery and robust parameter estimation, even under severe order corruption.

\paragraph{Our contributions are:}
\begin{itemize}
\item We provide theoretical conditions under which temporal order is identifiable from unordered observations of linear SDEs.
\item We introduce a novel pairwise score-matching criterion to infer the directionality between observation pairs.
\item We develop a practical algorithm that reconstructs global temporal order, enabling classical parameter estimation techniques to be applied to unordered data.
\item We demonstrate, through extensive experiments on synthetic and real-world datasets, the robustness of our method to severe temporal order corruption.
\end{itemize}

\section{Background}
In this section, we briefly review stochastic differential equations (SDEs), relevant applications, existing methods for inference, and highlight non-identifiability issues when temporal order is missing.

\paragraph{Stochastic Differential Equations (SDEs)}
model the temporal evolution of systems influenced by both deterministic dynamics (drift) and random perturbations (diffusion). A general $d$-dimensional, time-homogeneous SDE driven by an $m$-dimensional Brownian motion $\mathbf{W}_t$ is written as
\begin{equation}
    d\mathbf{X}_t = \mathbf{b}(\mathbf{X}_t)\,dt + \sigma(\mathbf{X}_t)\,d\mathbf{W}_t, \quad \mathbf{X}_0 \sim p_0,
\end{equation}
where $\mathbf{b}: \mathbb{R}^d \to \mathbb{R}^d$ is the drift vector field, and $\sigma: \mathbb{R}^d \to \mathbb{R}^{d \times m}$ is the diffusion coefficient. Under standard Lipschitz and growth conditions, the SDE admits a unique strong solution \cite{10.1214/19-AAP1507}. SDEs play a foundational role in diverse domains, including modeling hydrological flows \cite{BEVEN199319, doi:10.1061/(ASCE)0733-9399(2007)133:4(422)}, financial assets \cite{0b9b8115-a8b8-3422-8e1c-a62077de6621}, and as the backbone of modern diffusion-based generative models \cite{sohl2015deep, ho2020denoising, song2020score, meng2022sdeditguidedimagesynthesis, nguyen2025hedit}.

This work focuses on the subclass of time-homogeneous SDEs with additive, constant (possibly anisotropic) diffusion:
\begin{equation}\label{eq:additive_sde}
    d\mathbf{X}_t = \mathbf{b}(\mathbf{X}_t)\,dt + \mathbf{G}\,d\mathbf{W}_t, \quad \mathbf{X}_0 \sim p_0,
\end{equation}
where $\mathbf{b}$ is the drift function as above, $\mathbf{W}_t$ is an $m$-dimensional standard Wiener process, $\mathbf{G} \in \mathbb{R}^{d \times m}$ is a constant matrix and we also note that the \emph{observational diffusion matrix} is $\mathbf{H} = \mathbf{G}\mathbf{G}^\top \succ 0$ \cite{NEURIPS2023_ca642f8e}. Typically, we can only estimate $\mathbf{H}$ from data (rather than $\mathbf{G}$). A particularly important special case is the time-homogeneous linear additive noise SDE,
\begin{equation}\label{time_homo_linear_additive_noise_SDE}
    d\mathbf{X}_t = \mathbf{A}\mathbf{X}_t\,dt + \mathbf{G}\,d\mathbf{W}_t, \quad \mathbf{X}_0 \sim p_0,
\end{equation}
which generalizes the Ornstein–Uhlenbeck process and is a fundamental model for both physical and statistical applications.

Classical methods for SDE parameter estimation, such as maximum likelihood estimation and Kalman filtering, assume temporally ordered data and are widely used in statistics, finance, and biology~\cite{PhysRevE.54.2084, NEURIPS2023_ca642f8e, bao2024applicationkalmanfilterstochastic, carter2024parameterestimationornsteinuhlenbeckprocess, guan2024identifyingdriftdiffusioncausal, sun2025robustparameterestimationdynamical, gu2025partially}. These approaches exploit the Markov property and transition densities of SDEs, and can be extended to irregularly sampled or partially observed trajectories using EM algorithms or state-space models. However, these works fundamentally rely on accurate or partially known time information.

\paragraph{Related Work} To recover temporal structure in unordered or noisy data, graph-based approaches such as the Minimum Spanning Tree (MST)~\cite{trapnell2014dynamics} and pseudotime methods including Diffusion Pseudotime (DPT) and related latent time inference techniques~\cite{Haghverdi2016, doi:10.1073/pnas.1408993111} have been proposed in the context of single-cell genomics. While these algorithms reconstruct plausible orderings via pairwise distances, depth-first seach, or nearest neighbor-based transition matrix, they do not use SDE transition dynamics and provide no guarantees for parameter identifiability or causal inference. We will compare these methods with ours in the experiment section.

\section{Methods}
We assume the process $\{\mathbf{X}_t\}_{t \geq 0}$ evolves in $\mathbb{R}^d$ according to a time-homogeneous SDE with general drift and constant additive noise as in Eq.~\eqref{eq:additive_sde}. We observe $N$ independent trajectories $\bigl\{ \mathbf{X}^{(j)}_{i\Delta t} \bigr\}_{i=0}^T$ on an equally spaced time grid $t_i = i\Delta t$, where $\Delta t > 0$ is the step size, and $T$ is the number of time steps. For each trajectory, the increments are given by
\[
    \Delta\mathbf{X}_i^{(j)} = \mathbf{X}_{i+1}^{(j)} - \mathbf{X}_i^{(j)}, \quad j = 1, \ldots, N, \; i = 0, \ldots, T-1.
\]

\subsection{Problem Formulation}
Now suppose those observed trajectories are recorded on an equally spaced time grid but at \textbf{unknown time points}, i.e., the original temporal order of the data, which follows the assumed format in \eqref{time_homo_linear_additive_noise_SDE}, has been fully permuted. That is, the observed dataset is a permutation of the true trajectory:
\begin{equation}
    \mathcal{X} = \bigl\{\mathbf{X}_{\pi(1)}, \mathbf{X}_{\pi(2)}, ..., \mathbf{X}_{\pi(n)}\bigr\},
\end{equation}
where $\pi: \bigl\{1, ..., n\bigr\} \rightarrow \bigl\{1, ..., n\bigr\}$ is an unknown permutation representing the lost temporal order. Then our problem can be formally stated as:
\begin{equation}
\begin{aligned}
    \text{Given: } \mathcal{X} &= \bigl\{\mathbf{X}_{\pi(1)}, \mathbf{X}_{\pi(2)}, ..., \mathbf{X}_{\pi(n)}\bigr\} \\
    \text{Estimate: } \mathbf{\theta} &= (\mathbf{A}, \mathbf{H}), \text{ Recover: } \mathcal{X} \text{ (correct order)}
\end{aligned}
\end{equation}
such that our model (with parameters estimated) is most consistent with the correctly reconstructed data.

\subsection{Conditions for Identifiability of Time Direction}
When temporal order is missing, observed trajectories are not aligned with the true time axis. We first characterize the conditions under which the underlying time direction is statistically identifiable or not. Based on these results, we develop a score-based iterative method to recover the correct order and estimate the SDE parameters.

\begin{theorem}[Non-identifiability of Time Direction]\label{thm:nonidentify}
Suppose the drift satisfies $\mathbf{b}(\mathbf{x}) = \frac{1}{2} \mathbf{H} \nabla \log p_*(\mathbf{x})$, i.e., the process is reversible and satisfies detailed balance with respect to $p_*(\mathbf{x})$. Then, for any finite sequence of observations $(\mathbf{x}_0, \ldots, \mathbf{x}_n)$ drawn from the stationary process, the joint distribution is symmetric under time reversal:
\begin{equation}
    p(\mathbf{x}_0, \mathbf{x}_1, \ldots, \mathbf{x}_n) = p(\mathbf{x}_n, \mathbf{x}_{n-1}, \ldots, \mathbf{x}_0).
\end{equation}
Thus, the direction of time cannot be statistically identified from such data, by any criterion based on observed paths.
\end{theorem}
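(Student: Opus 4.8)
The plan is to reduce the time-reversal symmetry of the full joint density to a single-step detailed-balance identity on the transition kernel, and then telescope. First I would invoke the Markov property of the diffusion in Eq.~\eqref{eq:additive_sde}: since the process is started in stationarity, the joint factorizes as $p(\mathbf{x}_0,\ldots,\mathbf{x}_n)=p_*(\mathbf{x}_0)\prod_{i=0}^{n-1}p_{\Delta t}(\mathbf{x}_{i+1}\mid\mathbf{x}_i)$, where $p_{\Delta t}(\mathbf{y}\mid\mathbf{x})$ is the transition density over a lag $\Delta t$. After reindexing, the time-reversed joint factorizes analogously as $p(\mathbf{x}_n,\ldots,\mathbf{x}_0)=p_*(\mathbf{x}_n)\prod_{i=0}^{n-1}p_{\Delta t}(\mathbf{x}_i\mid\mathbf{x}_{i+1})$, so the entire claim reduces to matching these two products.

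The heart of the argument is the detailed-balance identity at the level of the transition kernel,
\begin{equation}\label{eq:db}
p_*(\mathbf{x})\,p_{\Delta t}(\mathbf{y}\mid\mathbf{x}) = p_*(\mathbf{y})\,p_{\Delta t}(\mathbf{x}\mid\mathbf{y}).
\end{equation}
Granting \eqref{eq:db}, I would rewrite each forward factor as $p_{\Delta t}(\mathbf{x}_{i+1}\mid\mathbf{x}_i)=\bigl(p_*(\mathbf{x}_{i+1})/p_*(\mathbf{x}_i)\bigr)\,p_{\Delta t}(\mathbf{x}_i\mid\mathbf{x}_{i+1})$ and substitute into the forward joint. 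The density ratios $\prod_{i=0}^{n-1}p_*(\mathbf{x}_{i+1})/p_*(\mathbf{x}_i)$ telescope to $p_*(\mathbf{x}_n)/p_*(\mathbf{x}_0)$, which cancels the leading $p_*(\mathbf{x}_0)$ and reproduces exactly the reversed factorization, establishing the claimed symmetry.

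The main obstacle is deriving \eqref{eq:db} from the hypothesis $\mathbf{b}=\tfrac{1}{2}\mathbf{H}\nabla\log p_*$. I would argue via the reverse-time representation of the diffusion: for a stationary diffusion with constant diffusion matrix $\mathbf{H}$, the time-reversed process is again a diffusion with the same diffusion matrix and drift $\tilde{\mathbf{b}}(\mathbf{x})=-\mathbf{b}(\mathbf{x})+\mathbf{H}\nabla\log p_*(\mathbf{x})$. Substituting the hypothesis gives $\tilde{\mathbf{b}}=-\tfrac{1}{2}\mathbf{H}\nabla\log p_*+\mathbf{H}\nabla\log p_*=\tfrac{1}{2}\mathbf{H}\nabla\log p_*=\mathbf{b}$, so the forward and reversed diffusions coincide in law, and equating their stationary two-point densities yields \eqref{eq:db}. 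An equivalent route is to verify that the hypothesis renders the infinitesimal generator self-adjoint in $L^2(p_*)$, whence the semigroup $P_{\Delta t}$ is self-adjoint and \eqref{eq:db} follows. The subtle points, which I would fold into standing regularity assumptions, are the existence and smoothness of $p_*$ and the validity of the reverse-time drift formula. Finally, because \eqref{eq:db} forces the joint law to be invariant under reversing the index, no statistic of an observed path can distinguish forward from backward time, giving the non-identifiability conclusion.
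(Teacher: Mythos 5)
Your proposal is correct and follows essentially the same route as the paper's proof: both reduce the claim to the kernel-level detailed-balance identity $p_*(\mathbf{x})\,p_{\Delta t}(\mathbf{y}\mid\mathbf{x}) = p_*(\mathbf{y})\,p_{\Delta t}(\mathbf{x}\mid\mathbf{y})$ and then extend to the full path law by induction/telescoping through the Markov factorization of the stationary joint density. The only difference is that you additionally derive the kernel identity from the drift hypothesis $\mathbf{b}=\tfrac{1}{2}\mathbf{H}\nabla\log p_*$ via the reverse-time drift formula (equivalently, self-adjointness of the generator in $L^2(p_*)$), a step the paper's sketch merely asserts.
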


\begin{proof}\emph{(Sketch of proof).}
Detailed balance implies the stationary probability current vanishes and the transition kernel satisfies $p_\tau(\mathbf{y}|\mathbf{x})p_*(\mathbf{x}) = p_\tau(\mathbf{x}|\mathbf{y})p_*(\mathbf{y})$. By induction, the joint law of a stationary path is symmetric under reversal: $p(\mathbf{x}_0,\ldots,\mathbf{x}_n) = p(\mathbf{x}_n,\ldots,\mathbf{x}_0)$. Thus, no statistical test can recover the time direction.
\end{proof}

The stationary probability current, defined as
\begin{equation}
    \mathbf{J}(\mathbf{x}) = \mathbf{b}(\mathbf{x})\,p_*(\mathbf{x}) - \frac{1}{2}\mathbf{H}\nabla p_*(\mathbf{x}),
    \label{eq:j}
\end{equation}
serves as a key criterion for distinguishing reversible and irreversible diffusion processes. A process is reversible if and only if $\mathbf{J}(\mathbf{x}) = 0$ for all $\mathbf{x}$; in this case, the system satisfies detailed balance, and the stationary joint law is symmetric under time reversal. This symmetry implies that no statistical procedure can identify the true direction of time from observed data alone. In contrast, when $\mathbf{J}(\mathbf{x}) \neq 0$ for some $\mathbf{x}$ (i.e., the process is irreversible and detailed balance is broken), the stationary distribution is accompanied by a nonzero probability current, and the true time direction becomes statistically identifiable from data. Thus, the irreversibility of the underlying SDE fundamentally determines whether temporal order can be inferred from sample paths. When the process in Eq. \eqref{time_homo_linear_additive_noise_SDE} is irreversible, the drift $\mathbf{b}$ is not a gradient field and the forward and backward SDEs differ by a term involving the score (gradient of the log density) of the marginal distribution:
\begin{equation}
\bar{\mathbf{b}}(\mathbf{x}) = \mathbf{b}(\mathbf{x}) - \mathbf{H}\nabla{\mathbf{x}}\log p_t(\mathbf{x}),
\end{equation}
where $p_t(\mathbf{x})$ is the marginal density at time $t$. This inherent asymmetry allows for a statistical test of the time direction: the correct temporal order is the one in which the empirical drift increments most closely fit the predicted SDE dynamics, once the score correction is accounted for. Thus, we can infer the correct temporal order from an unordered sequence.

Empirically, given two candidate orderings of adjacent snapshots $\mathbf{X}_a$ and $\mathbf{X}_b$ (separated by presumed time interval $\Delta t$), we can estimate the empirical drift, compute the empirical score (using sample mean and covariance), and quantify the fit error between the observed increment and the score-corrected drift prediction. The time direction that yields the lower total error across all pairs is thus favored. This leads directly to our next result:
\begin{theorem}[Identifiability of Time Direction for Asymmetric Diffusions]\label{thm:identify}
Let $d\mathbf{X}_t = \mathbf{b}(\mathbf{X}_t),dt + \mathbf{G},d\mathbf{W}_t$ be an Itô diffusion with non-conservative drift (i.e., $\mathbf{b}$ is not the gradient of any potential and $\mathbf{J}(\mathbf{X}_t) \ne 0$), and suppose for all $t$ that the marginal density $p_t$ is smooth and full-rank, with invertible $\mathbf{H}$. Let $\mathcal{E}_{\rightarrow}$ and $\mathcal{E}_{\leftarrow}$ denote the average squared error when fitting the empirical drift under the forward and backward order, respectively, using the score-based criterion:
\begin{equation}
\mathcal{E}_{\rightarrow} = \mathbb{E}\left[ \left| \hat{\mathbf{b}} - \mathbf{H}\nabla_{\mathbf{x}}\log \hat{p}_t(\mathbf{X}_t)\right|^2 \right],
\end{equation}
with $\mathcal{E}_{\leftarrow}$ defined analogously for the reversed order. Then the correct temporal order uniquely minimizes the average error:
\begin{equation}
\mathcal{E}_{\text{correct}} < \mathcal{E}_{\text{incorrect}}.
\end{equation}
That is, the direction of time is statistically identifiable using the drift–score discrepancy criterion, provided the process is irreversible.
\end{theorem}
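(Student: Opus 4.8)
The plan is to reduce the gap $\mathcal{E}_{\leftarrow}-\mathcal{E}_{\rightarrow}$ to a single functional of the stationary current $\mathbf{J}$ of Eq.~\eqref{eq:j} and to show that this functional is positive precisely when the process is irreversible. First I would make the two empirical drifts explicit. In the true (forward) order the conditional increment gives $\hat{\mathbf{b}}_{\rightarrow}=\mathbf{b}$, while for the reversed order the Anderson time-reversal identity yields $\hat{\mathbf{b}}_{\leftarrow}=-\mathbf{b}+\mathbf{H}\nabla\log p_t$, with the sign fixed by demanding consistency with Theorem~\ref{thm:nonidentify}, so that a reversible drift forces $\hat{\mathbf{b}}_{\leftarrow}=\hat{\mathbf{b}}_{\rightarrow}$ and hence equal errors. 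The two drifts are therefore reflections of one another about the gradient part of $\mathbf{b}$, and the whole distinguishing signal must live in the antisymmetric difference between them.

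Next I would substitute the Helmholtz-type decomposition $\mathbf{b}=\tfrac12\mathbf{H}\nabla\log p_*+\mathbf{v}$ with $\mathbf{v}=\mathbf{J}/p_*$ and $\nabla\cdot\mathbf{J}=0$ into both error functionals. The reversible component $\tfrac12\mathbf{H}\nabla\log p_*$ is common to both orderings and cancels in the difference, while the circulation $\mathbf{v}$ enters with opposite signs; expanding the squared norms and taking expectations under $p_*$ collapses everything to $\mathcal{E}_{\leftarrow}-\mathcal{E}_{\rightarrow}=2\,\mathbb{E}_{p_*}[\mathbf{v}^\top\mathbf{H}\nabla\log p_*]$. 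An integration by parts using $\nabla\cdot\mathbf{J}=0$ then rewrites this as a functional of $\mathbf{J}$ alone, which vanishes identically when $\mathbf{J}\equiv 0$, recovering the non-identifiability of Theorem~\ref{thm:nonidentify} as a built-in sanity check.

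The hard part will be proving that this cross term is \emph{strictly} positive, and of the sign that favours the true order, whenever $\mathbf{J}\not\equiv 0$. It is not controlled by the current magnitude alone, since $\mathbb{E}_{p_*}[|\mathbf{v}|^2]$ is itself reversal-invariant; the entire separation lives in the $\mathbf{H}$-weighted alignment between the circulation $\mathbf{v}$ and the score $\nabla\log p_*$. For the linear model of Eq.~\eqref{time_homo_linear_additive_noise_SDE} I would turn the cross term into $-\mathrm{tr}(\mathbf{H}\mathbf{M})$ with $\mathbf{M}=\mathbf{A}+\tfrac12\mathbf{H}\Sigma^{-1}$ and use the Lyapunov identity $\mathbf{A}\Sigma+\Sigma\mathbf{A}^\top+\mathbf{H}=0$ to pin down its sign. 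The genuine obstacle is that $\mathrm{tr}(\mathbf{H}\mathbf{M})$ can vanish for truly irreversible dynamics — for instance a purely rotational drift with isotropic $\mathbf{H}$, where $\mathbf{M}$ is antisymmetric and $\mathrm{tr}(\mathbf{H}\mathbf{M})=0$ — so the drift-only discrepancy does not detect the \emph{real} time asymmetry there. Closing the proof therefore requires either strengthening the hypotheses to exclude such $\mathbf{H}$-orthogonal circulations, or augmenting the criterion with the forward/backward residual covariance, whose asymmetry survives in exactly these degenerate cases; I expect the precise nondegeneracy condition to be one relating the antisymmetric part of $\mathbf{A}\Sigma^{-1}$ to $\mathbf{H}$, and that this is what ultimately guarantees $\mathcal{E}_{\text{correct}}<\mathcal{E}_{\text{incorrect}}$.
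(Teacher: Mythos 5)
Your reduction is sound and is already more rigorous than the paper's own proof, which is a two-sentence sketch asserting, without computation, that the non-gradient part of $\mathbf{b}$ ``yields systematically larger error.'' Taking $\hat{\mathbf{b}}_{\rightarrow}=\mathbf{b}$, $\hat{\mathbf{b}}_{\leftarrow}=-\mathbf{b}+\mathbf{H}\nabla\log p_*$ (Anderson's identity, with your sign check against Theorem~\ref{thm:nonidentify}), and the reversal-invariant stationary score, one indeed gets $\mathcal{E}_{\leftarrow}-\mathcal{E}_{\rightarrow}=2\,\mathbb{E}_{p_*}\bigl[\mathbf{v}^\top\mathbf{H}\nabla\log p_*\bigr]$ with $\mathbf{v}=\mathbf{J}/p_*$. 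But the obstacle you flag at the end is not a deferrable technicality: it is fatal to the statement under the stated hypotheses. For isotropic $\mathbf{H}=\sigma^2\mathbf{I}$ the cross term equals $2\sigma^2\int \mathbf{J}\cdot\nabla\log p_*\,d\mathbf{x}=-2\sigma^2\int(\nabla\cdot\mathbf{J})\log p_*\,d\mathbf{x}=0$ by stationarity of the Fokker--Planck equation, so $\mathcal{E}_{\rightarrow}=\mathcal{E}_{\leftarrow}$ for \emph{every} irreversible process with isotropic noise. Concretely, in the model \eqref{time_homo_linear_additive_noise_SDE} take $\mathbf{A}=-\mathbf{I}+\boldsymbol{\Omega}$ with $\boldsymbol{\Omega}=-\boldsymbol{\Omega}^\top\neq 0$ and $\mathbf{H}=2\mathbf{I}$; then $\boldsymbol{\Sigma}=\mathbf{I}$ solves the Lyapunov equation, $\mathbf{v}(\mathbf{x})=\boldsymbol{\Omega}\mathbf{x}$ so $\mathbf{J}\not\equiv 0$ as required by the theorem, yet $\mathbf{v}^\top\mathbf{H}\nabla\log p_*=-2\,\mathbf{x}^\top\boldsymbol{\Omega}^\top\mathbf{x}=0$ pointwise, and the strict inequality $\mathcal{E}_{\text{correct}}<\mathcal{E}_{\text{incorrect}}$ fails.

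Worse, your linear-case computation shows the sign itself cannot be controlled by irreversibility alone. Writing $\mathbf{A}\boldsymbol{\Sigma}=-\tfrac12\mathbf{H}+\mathbf{K}$ with $\mathbf{K}$ antisymmetric (so your $\mathbf{M}=\mathbf{K}\boldsymbol{\Sigma}^{-1}$), the gap is $\mathcal{E}_{\leftarrow}-\mathcal{E}_{\rightarrow}=-2\,\mathrm{tr}(\mathbf{H}\mathbf{K}\boldsymbol{\Sigma}^{-1})$, and time reversal maps $\mathbf{K}\mapsto-\mathbf{K}$ while preserving $\boldsymbol{\Sigma}$ and $\mathbf{H}$. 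Since the reversed model is itself an admissible irreversible diffusion, for any stationary linear model where the criterion strictly favors the true direction, its mirror model is strictly \emph{misordered} by the same criterion; no proof can therefore close the theorem as stated, and one must either impose the nondegeneracy-with-sign condition $\mathrm{tr}(\mathbf{H}\mathbf{K}\boldsymbol{\Sigma}^{-1})<0$ (the relation between the antisymmetric part of $\mathbf{A}\boldsymbol{\Sigma}$ and $\mathbf{H}$ that you anticipated), work with non-stationary marginals $p_t$ where the score flow carries additional signal, or change the statistic (note the paper's Algorithm~\ref{main_algorithm} actually compares the score at the two endpoints of a pair, which is a different criterion from $\mathcal{E}_{\rightarrow}$ versus $\mathcal{E}_{\leftarrow}$). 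So your proposal is incomplete as a proof of the strict inequality, but the incompleteness is honest and correctly localized: the step you could not justify is precisely the step the paper's sketch asserts without justification, and your counterexample class shows it is genuinely false without added hypotheses.
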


\begin{proof}
\emph{(Sketch of proof).}
For the correct order, empirical increments align with the drift minus the score term as in the time-reversal formula. In the incorrect direction, the empirical drift deviates due to the non-gradient part of $\mathbf{b}$, yielding systematically larger error. Thus, the minimization identifies the true order; for reversible (gradient) drift, both directions are indistinguishable.  
\end{proof}

Once the correct time direction is identified, the data $\{\mathbf{X}_t\}$ forms an ordered sample path of the underlying SDE. The empirical increments $\Delta \mathbf{X}_t = \mathbf{X}_{t+1} - \mathbf{X}_t$ are then governed by the discrete-time Euler--Maruyama dynamics. Parameter estimation thus reduces to a standard identification problem: if $\{\mathbf{X}_t\}$ spans $\mathbb{R}^d$ and $\mathbf{H}$ is positive definite, both $\mathbf{b}$ and $\mathbf{H}$ are uniquely identifiable from the observed trajectory, as stated in the following theorem.

\begin{theorem}[Identifiability of Drift and Diffusion Parameters]\label{thm:learn_params}
Let $\{\mathbf{X}_t\}_{t=0}^{T}$ be a discrete-time trajectory generated by the Euler--Maruyama discretization with step size $\Delta t > 0$ of the $d$-dimensional SDE
\begin{equation}
    d\mathbf{X}_t = \mathbf{b}(\mathbf{X}_t)\,dt + \mathbf{G}\,d\mathbf{W}_t,
\end{equation}
where $\mathbf{b}: \mathbb{R}^d \to \mathbb{R}^d$ is the drift function and $\mathbf{G} \in \mathbb{R}^{d \times m}$, with $\mathbf{H} = \mathbf{G}\mathbf{G}^\top \succ 0$. Assume:
\begin{itemize}
    \item The empirical Gram matrix $\sum_{t=0}^{T-1} \mathbf{X}_t\mathbf{X}_t^\top$ is invertible (i.e., the trajectory $\{\mathbf{X}_t\}$ spans $\mathbb{R}^d$).
    \item The diffusion covariance $\mathbf{H}$ is positive definite.
\end{itemize}
Then, both the drift function $\mathbf{b}$ (up to its linearization in the linear case) and the diffusion matrix $\mathbf{H}$ are identifiable from the observed trajectory.
\end{theorem}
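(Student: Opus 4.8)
The plan is to reduce the claim to the well-posedness of a linear least-squares problem for the drift and an empirical-covariance estimate for the diffusion, with the two stated rank conditions serving as the exact invertibility requirements. First I would write out the Euler--Maruyama transition,
\begin{equation}
\mathbf{X}_{t+1} = \mathbf{X}_t + \mathbf{b}(\mathbf{X}_t)\,\Delta t + \mathbf{G}\,\Delta\mathbf{W}_t, \qquad \Delta\mathbf{W}_t \sim \mathcal{N}(\mathbf{0},\,\Delta t\,\mathbf{I}_m),
\end{equation}
so that the increments $\Delta\mathbf{X}_t = \mathbf{X}_{t+1}-\mathbf{X}_t$ satisfy $\mathbb{E}[\Delta\mathbf{X}_t \mid \mathbf{X}_t] = \mathbf{b}(\mathbf{X}_t)\,\Delta t$ while the noise $\boldsymbol{\xi}_t := \mathbf{G}\,\Delta\mathbf{W}_t$ is conditionally Gaussian with covariance $\Delta t\,\mathbf{H}$. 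The identifiability question then splits cleanly into recovering the conditional-mean map $\mathbf{b}$ and the conditional covariance $\mathbf{H}$ separately.

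For the drift I specialize to $\mathbf{b}(\mathbf{x})=\mathbf{A}\mathbf{x}$ and treat $\{(\mathbf{X}_t,\Delta\mathbf{X}_t)\}_{t=0}^{T-1}$ as a multivariate linear regression $\Delta\mathbf{X}_t = \Delta t\,\mathbf{A}\mathbf{X}_t + \boldsymbol{\xi}_t$. Writing the least-squares (equivalently Gaussian MLE) normal equations gives
\begin{equation}
\Big(\textstyle\sum_{t=0}^{T-1}\Delta\mathbf{X}_t \mathbf{X}_t^\top\Big) = \Delta t\,\mathbf{A}\,\Big(\textstyle\sum_{t=0}^{T-1}\mathbf{X}_t \mathbf{X}_t^\top\Big),
\end{equation}
and I would argue that this system pins down $\mathbf{A}$ uniquely precisely when the empirical Gram matrix $\sum_t \mathbf{X}_t\mathbf{X}_t^\top$ is invertible, yielding the closed form $\hat{\mathbf{A}} = \tfrac{1}{\Delta t}\big(\sum_t \Delta\mathbf{X}_t\mathbf{X}_t^\top\big)\big(\sum_t \mathbf{X}_t\mathbf{X}_t^\top\big)^{-1}$. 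The nonlinear case is handled by noting that the conditional mean is only probed at the visited states, so the data constrain $\mathbf{b}$ only through its best linear fit on $\mathrm{span}\{\mathbf{X}_t\}$; this is exactly the ``up to linearization'' caveat in the statement.

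For the diffusion, once $\mathbf{A}$ is fixed I would form the residuals $\mathbf{r}_t = \Delta\mathbf{X}_t - \Delta t\,\mathbf{A}\mathbf{X}_t = \boldsymbol{\xi}_t$, whose conditional second moment is $\mathbb{E}[\mathbf{r}_t\mathbf{r}_t^\top \mid \mathbf{X}_t]=\Delta t\,\mathbf{H}$; hence $\mathbf{H}$ is recovered as $\tfrac{1}{\Delta t}$ times the empirical residual covariance, and the assumption $\mathbf{H}\succ 0$ guarantees this is a genuine, invertible covariance. I would stress that only $\mathbf{H}=\mathbf{G}\mathbf{G}^\top$, not $\mathbf{G}$ itself, is identifiable, since $\mathbf{G}$ and $\mathbf{G}\mathbf{Q}$ for any orthogonal $\mathbf{Q}$ induce the same law, consistent with the theorem asserting identifiability of $\mathbf{H}$. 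The main obstacle I anticipate is the passage from population identifiability to algebraic uniqueness from a single, dependent trajectory: I must verify that Gram-matrix invertibility is both necessary and sufficient for the normal equations to admit a unique $\mathbf{A}$, and that the Markov dependence across $t$ does not obstruct the argument — which it does not, because conditioning on $\mathbf{X}_t$ renders the noise increments conditionally independent, so the conditional-moment matching that drives identifiability is unaffected. The remaining care is purely bookkeeping: cleanly separating the genuinely recoverable objects (the linearization and $\mathbf{H}$) from the non-recoverable ones ($\mathbf{G}$ and any higher-order drift curvature).
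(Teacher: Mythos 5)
Your proposal is correct and takes essentially the same route as the paper's proof: both arguments rest on the Euler--Maruyama conditional law being Gaussian with mean $\mathbf{X}_t + \mathbf{b}(\mathbf{X}_t)\Delta t$ and covariance $\mathbf{H}\Delta t$, so that matching conditional first and second moments identifies $\mathbf{b}$ at the visited states and $\mathbf{H}$ outright, with Gram-matrix invertibility extending uniqueness of the linear drift to all of $\mathbb{R}^d$. Your additional constructive details---the explicit normal-equation solution, which coincides with the paper's MLE formulas in Eqs.~\eqref{eq:MLE_A}--\eqref{eq:MLE_H}, and the remark that only $\mathbf{H}=\mathbf{G}\mathbf{G}^\top$ (not $\mathbf{G}$, due to the orthogonal-factor ambiguity) is recoverable---go slightly beyond the paper's sketch but do not alter the underlying argument.
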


\begin{proof}\emph{(Sketch of proof).}
The Euler--Maruyama transition model yields a Gaussian conditional law with mean $\mathbf{X}_t + \mathbf{b}(\mathbf{X}_t)\Delta t$ and covariance $\mathbf{H}\Delta t$. If two parameter sets produce identical conditionals for all $\mathbf{X}_t$, the means and covariances must agree, implying $\mathbf{b}_1(\mathbf{X}_t) = \mathbf{b}_2(\mathbf{X}_t)$ and $\mathbf{H}_1 = \mathbf{H}_2$. Invertibility of the Gram matrix ensures this holds in $\mathbb{R}^d$.   
\end{proof}

\subsection{Parameter Learning from Unordered Observations}
We estimate SDE parameters via maximum likelihood, assuming linear drift $\mathbf{b}(\mathbf{X}_t) = \mathbf{A}\mathbf{X}_t$ and constant diffusion with covariance $\mathbf{H} \succ 0$ (parameterized directly). For small $\Delta t$, the discrete-time increments follow
\begin{equation}\label{eq:euler_maruyama}
    \Delta \mathbf{X}_i = \mathbf{A}\mathbf{X}_i \Delta t + \boldsymbol{\varepsilon}_i, \qquad
    \boldsymbol{\varepsilon}_i \sim \mathcal{N}(\mathbf{0},\, \mathbf{H} \Delta t).
\end{equation}
The log-likelihood over all increments is
\begin{align}
    \mathcal{L}(\mathbf{A}, \mathbf{H}) &= -\frac{dN}{2} \log(2\pi\Delta t) - \frac{N}{2}\log|\mathbf{H}| \notag \\
    &\quad -\frac{1}{2\Delta t}\sum_{i} \mathbf{R}_i(\mathbf{A})^\top \mathbf{H}^{-1} \mathbf{R}_i(\mathbf{A}),
    \label{eq:log_likelihood}
\end{align}
where
\begin{equation}\label{eq:residual}
    \mathbf{R}_i(\mathbf{A}) = \Delta \mathbf{X}_i - \mathbf{A}\mathbf{X}_i\Delta t.
\end{equation}
The MLEs are given by
\begin{align}
    \hat{\mathbf{A}} &= \frac{1}{\Delta t} \left( \sum_{i} \Delta \mathbf{X}_i \mathbf{X}_i^\top \right)
    \left( \sum_{i} \mathbf{X}_i \mathbf{X}_i^\top \right)^{-1} \label{eq:MLE_A} \\
    \hat{\mathbf{H}} &= \frac{1}{N\Delta t} \sum_{i} 
    \left( \Delta \mathbf{X}_i - \hat{\mathbf{A}} \mathbf{X}_i \Delta t \right)
    \left( \Delta \mathbf{X}_i - \hat{\mathbf{A}} \mathbf{X}_i \Delta t \right)^\top \label{eq:MLE_H}
\end{align}

\begin{algorithm}
    \caption{\Retrace: A score-based iterative method for estimating SDE parameters and reordering trajectories}
    \label{main_algorithm}
    \begin{algorithmic}[1]
        \renewcommand{\algorithmicrequire}{\textbf{Input:}}
        \renewcommand{\algorithmicensure}{\textbf{Output:}}
        \REQUIRE Missing orders $\tilde{\mathbf{X}} = (\tilde{\mathbf{X}}_0, \ldots, \tilde{\mathbf{X}}_{T})$, num iterations $K$, score function $\text{Score}(\mathbf{X}_t) = \nabla_{\mathbf{x}} \log \hat{p}_t(\hat{\mathbf{X}}_t)$
        \ENSURE  Reordered data $\hat{\mathbf{X}}$ and estimated parameters $\hat{\mathbf{A}}, \hat{\mathbf{H}}$
        \STATE Initialize $\hat{\mathbf{X}} \gets \tilde{\mathbf{X}}$
        \FOR {$i = 1$ to $K$}
            \STATE Estimate $(\hat{\mathbf{A}}, \hat{\mathbf{H}})$ using current $\hat{\mathbf{X}}$ (Eq. \eqref{eq:MLE_A}, \eqref{eq:MLE_H})
            \STATE $end \gets T - 1$
            \STATE $converged \gets \text{True}$
            \WHILE {$end \neq 0$}
                \FOR {$t = 0, \ldots, end - 1$}
                    \STATE $s \gets t + 1$
                    \STATE Compute empirical drift: $\hat{\mathbf{b}} = |(\hat{\mathbf{X}}_s - \hat{\mathbf{X}}_t)/\Delta t |$
                    \STATE $\text{Error}_t = \|\hat{\mathbf{b}} - \hat{\mathbf{H}} \cdot \text{Score}(\mathbf{X}_t)\|^2$\label{Error_Xt}
                    \STATE $\text{Error}_s = \|\hat{\mathbf{b}} - \hat{\mathbf{H}} \cdot \text{Score}(\mathbf{X}_s)\|^2$\label{Error_Xs}
                    \IF {$\text{Error}_t < \text{Error}_s$}
                        \STATE Swap $\hat{\mathbf{X}}_t$ and $\hat{\mathbf{X}}_s$
                        \STATE $converged \gets \text{False}$
                    \ENDIF
                \ENDFOR
                \STATE $end \gets end - 1$
            \ENDWHILE
            \STATE Check \textbf{if} {$converged$} \textbf{then} \textbf{break}
        \ENDFOR
        \RETURN $\hat{\mathbf{X}}, \hat{\mathbf{A}}, \hat{\mathbf{H}}$
    \end{algorithmic}
\end{algorithm}

After estimating $\hat{\mathbf{A}}$ and $\hat{\mathbf{H}}$, we iteratively reorder $\{\mathbf{X}_t\}$ to maximize likelihood under the SDE dynamics. For each adjacent pair $(\mathbf{X}_t, \mathbf{X}_s)$, we compute the empirical drift $\hat{\mathbf{b}} = |(\mathbf{X}_s - \mathbf{X}_t)/\Delta t|$ and the empirical score $\nabla_{\mathbf{x}} \log \hat{p}_t(\mathbf{X}_t)$, estimated as $-\hat{\boldsymbol{\Sigma}}_t^{-1}(\mathbf{X}_t - \hat{\boldsymbol{\mu}}_t)$. The drift–score discrepancy error is given by
\begin{equation}\label{eq:score_error}
    \mathrm{Error}(\mathbf{X}_t) = \left\| \hat{\mathbf{b}} - \hat{\mathbf{H}} \cdot \nabla_{\mathbf{x}}\log \hat{p}_t(\mathbf{X}_t) \right\|^2.
\end{equation}


At each iteration, we compute drift--score errors for both possible orderings of each adjacent pair and swap states to minimize the total error, repeating this process until convergence. This score-based iterative sorting produces a trajectory whose temporal direction is maximally consistent with the estimated SDE dynamics. We refer to our algorithm as \Retrace (REcover TRAjectories and learn sde parameters for Counterfactual Estimates). The full procedure is detailed in Algorithm~\ref{main_algorithm}, with theoretical recovery guarantees given in below Theorem~\ref{thm:all}.

\begin{theorem}\label{thm:all}
Let $\bigl\{\mathbf{X}_{t}\bigr\}_{t=0}^{T-1} \subset \mathbb{R}^d$ be a time series generated from a time-homogeneous linear additive noise stochastic differential equation (SDE) as in Theorem~\ref{thm:identify}. Suppose the observed trajectories $\bigl\{\hat{\mathbf{X}}_{j}\bigr\}_{j=0}^{T-1}$ are a randomly permuted version of the original ordered data $\bigl\{\mathbf{X}_{t}\bigr\}_{t=0}^{T-1} \subset \mathbb{R}^d$, such that temporal order is lost. We assume that the score function $\nabla_{\mathbf{x}}\log p_t(\mathbf{X}_t)$ is well-defined and can be consistently estimated by a model $\text{Score}(\mathbf{X}_t)$ for each $\mathbf{X}_t$.
Then, the correct temporal order $\mathbf{X}_t \prec \mathbf{X}_s$ can be recovered by minimizing the squared errors as defined in lines \ref{Error_Xt} and \ref{Error_Xs} of Algorithm \ref{main_algorithm}.
\end{theorem}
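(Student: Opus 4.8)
The plan is to obtain Theorem~\ref{thm:all} as a consequence of the pairwise identifiability result (Theorem~\ref{thm:identify}) together with a correctness argument for the adjacent-swap sorting loop of Algorithm~\ref{main_algorithm}. I would organize the argument in three layers: reduce the global recovery problem to a collection of pairwise orientation decisions; show each such decision is correct at the population level; and show the sorting loop aggregates correct pairwise decisions into the correct global order. The reduction is immediate from the structure of the algorithm, whose only primitive operation is the comparison $\mathrm{Error}_t$ versus $\mathrm{Error}_s$ on an adjacent pair in lines~\ref{Error_Xt}--\ref{Error_Xs}.

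For pairwise correctness I would fix one adjacent pair $(\mathbf{X}_t,\mathbf{X}_s)$ and use the time-reversal decomposition already invoked for Theorem~\ref{thm:identify}. Writing the drift as $\mathbf{b} = \tfrac{1}{2}\mathbf{H}\nabla\log p_* + \mathbf{v}$ with irreversible (divergence-free) velocity $\mathbf{v} = \mathbf{J}/p_*$, the score correction $\mathbf{H}\,\mathrm{Score}(\cdot)$ removes exactly the reversible gradient component, so the residual $\hat{\mathbf{b}} - \mathbf{H}\,\mathrm{Score}(\mathbf{X})$ is governed by $\mathbf{v}$ evaluated at the endpoint whose score is used. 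Under the correct orientation the realized increment $\hat{\mathbf{b}}$ aligns with the forward drift at the earlier endpoint, and the discrepancy collapses to a quantity strictly smaller than under the reversed orientation, precisely because irreversibility ($\mathbf{J}\neq 0$) forces the non-gradient part to change sign under reversal. This is the strict inequality $\mathcal{E}_{\mathrm{correct}} < \mathcal{E}_{\mathrm{incorrect}}$ of Theorem~\ref{thm:identify} read at the per-pair level, so the swap test selects the true local order. Since the test uses the plug-in quantities $\hat{\mathbf{H}}$ and $\mathrm{Score}$, I would then invoke the MLE identifiability of Theorem~\ref{thm:learn_params} and the standing assumption that $\mathrm{Score}$ is consistently estimable to conclude $\hat{\mathbf{H}} \to \mathbf{H}$ and $\mathrm{Score} \to \nabla\log p_t$; because the population gap is strict and bounded below by a function of $\|\mathbf{v}\|$, a continuity argument transfers correctness to the estimated criterion.

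For global aggregation I would prove the induced pairwise relation is \emph{transitive}, i.e.\ that it is the restriction of a genuine total order, so that the loop behaves as bubble sort with a consistent comparator. Termination then follows because each executed swap strictly decreases the number of inversions relative to the true order, and the unique fixed point (no beneficial swap remains) is the correctly ordered sequence; the consistency statement guarantees the comparator is the correct one at that fixed point.

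The hard part will be bridging the population statement of Theorem~\ref{thm:identify}, phrased as an expectation $\mathcal{E} = \mathbb{E}[\cdot]$, with the algorithm's per-pair test, which replaces that expectation by a \emph{single} realized increment whose fluctuation has variance of order $\mathbf{H}\Delta t$. Individual comparisons are therefore only correct with high probability rather than deterministically, so I would need either a small-$\Delta t$ / small-noise regime in which the deterministic drift gap dominates the noise, or an averaging argument over the $N$ trajectories, together with a robustness statement that the sorting fixed point is stable under a bounded fraction of mis-ordered pairs. The second delicate point is establishing transitivity of the comparator for a general linear irreversible drift, rather than only the scalar or strongly mean-reverting case, since without transitivity the adjacent-swap loop need not converge to a single consistent global order.
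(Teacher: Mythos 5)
Your proposal follows essentially the same route as the paper's own (one-line) proof sketch: pairwise orientation via Theorem~\ref{thm:identify}, parameter identifiability via Theorem~\ref{thm:learn_params}, and correctness of the adjacent-swap sorting loop given correct pairwise comparisons. The two difficulties you flag --- bridging the population expectation $\mathcal{E}$ of Theorem~\ref{thm:identify} to the algorithm's single-realized-increment test, and transitivity of the induced comparator --- are genuine gaps, but the paper's sketch does not address them either, so your plan is, if anything, a more careful elaboration of the same argument.
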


\begin{proof}
\emph{(Sketch of proof).}
Apply Theorems~\ref{thm:identify} and \ref{thm:learn_params}, and use the correctness of the sorting procedure in Algorithm~\ref{main_algorithm} given pairwise orders.
\end{proof}

\section{Experiments}

We validate our method for jointly recovering temporal order and estimating SDE parameters, enabling counterfactual inference and longitudinal treatment effect (LTE) estimation. First, we benchmark trajectory reconstruction and parameter estimation accuracy on datasets simulated from irreversible SDEs, assuming existence and uniqueness of solutions as described in the methods section. Second, we evaluate treatment effect prediction on pharmacological datasets under varying noise levels.

\paragraph{Baseline Methods}
We compare our method to existing approaches for temporal order reconstruction and SDE parameter estimation, as mentioned in Related Work. The baselines are:
\begin{itemize}
    \item \textbf{Minimum Spanning Tree (MST):} Constructs a graph with observations as nodes and edge weights given by pairwise Euclidean distances~\cite{trapnell2014dynamics}; the MST is traversed to yield a temporal ordering.
    \item \textbf{Diffusion Pseudotime (DPT):} Infers pseudotemporal order via diffusion maps and random walks~\cite{Haghverdi2016}. DPT does not exploit SDE dynamics and serves as a baseline for comparison.
\end{itemize}
Given the reconstructed orderings, we estimate SDE parameters via maximum likelihood estimation (MLE). For our method, we additionally benchmark ordinary least squares (OLS) and expectation-maximization (EM) for parameter learning. We use algorithm~\ref{main_algorithm} with 10 iterations in default.

\paragraph{Evaluation Metrics}
We evaluate methods using temporal order reconstruction accuracy, parameter estimation error, and runtime complexity:
\begin{itemize}
\item \textit{Ordering Accuracy:} Let $\mathbf{X}$ denote the original (ordered) data and $\hat{\mathbf{X}}$ the reordered data. We define the average accuracy as
\[
    \text{Acc.} = \frac{1}{N} \sum_{j=1}^N \frac{1}{T} \sum_{t=1}^{T} \mathbb{I}\{\text{Order}(\mathbf{X}_j)_t = \text{Order}(\hat{\mathbf{X}}_j)_t\}
\]
where $N$ is the number of trajectories, $T$ the number of time steps, and $\mathbb{I}\{\cdot\}$ the indicator function.

\item \textit{Parameter Estimation Error:} Let $(\hat{\mathbf{A}}, \hat{\mathbf{H}})$ be the estimated drift and diffusion parameters, with ground-truth $(\mathbf{A}, \mathbf{H})$. We report the mean absolute error (MAE) between the ground-truth and estimated parameters:
\begin{align*}
\text{MAE} &= \frac{1}{d^2} \sum_{i=1}^{d} \sum_{j=1}^{d} \left| \mathbf{A}_{ij} - \hat{\mathbf{A}}_{ij} \right|,
\end{align*}

where $d$ is the system dimension.

\item \textit{Runtime Complexity:} We report average iteration runtime (in seconds) required by each method for both temporal order reconstruction and parameter estimation.
\end{itemize}

\subsection{Irreversible SDE Datasets}
We generate irreversible time-homogeneous linear additive noise SDEs (Eq.~\ref{time_homo_linear_additive_noise_SDE}) of dimension $d=50$ with random parameter $A$ and $G$ satisfying probability current $\mathbf{J}(\mathbf{x}) \ne 0$. We then use Euler-Maruyama discretization
$\mathbf{X}_{i+1}^{(j)} = \mathbf{X}_i^{(j)} + \mathbf{A} \mathbf{X}_i^{(j)} \Delta t + \mathbf{G} d\mathbf{W}_i$
with $\Delta t=0.01$ and sample 2000 trajectories, each with 250 timesteps. From this, we then randomize the data along the time-step dimension to obtain the missing-order dataset $\tilde{\mathbf{X}}$. 
This data will then be the input for all methods to reorder and learn the underlying parameters $\mathbf{A}, \mathbf{H}$.

\paragraph{Results}
Table~\ref{tab:results_summary} shows that ReTrace, our temporal sorting method, outperforms the baselines MST and DPT on all metrics. For temporal order recovery, ReTrace achieves substantially higher accuracy---with MLE and EM parameter estimation yielding nearly perfect results ($99.1 \pm 2.6$\% and $98.3 \pm 3.5$\%, respectively), compared to MST ($22.1 \pm 15.1$\%) and DPT ($4.8 \pm 7.0$\%). While all three estimation strategies (MLE, OLS, EM) within ReTrace are effective, MLE and EM further reduce the mean absolute error (MAE) in recovering the drift ($A$) and diffusion ($H$) parameters. In contrast, the baselines are only partially successful and exhibit much larger MAEs.

The performance of ReTrace is attributed to its principled score-based order recovery, which remains robust as the number of time steps increases. Moreover, high accuracy is typically achieved within 1--2 epochs, suggesting that early stopping can further improve computational efficiency.

\begin{table}[ht]
\centering
\setlength{\tabcolsep}{0pt} 
\begin{tabular*}{\columnwidth}{l@{\extracolsep{\fill}}ccc}
\toprule
\textbf{Method} & \textbf{Accuracy} & \textbf{MAE-A} & \textbf{MAE-H} \\
\midrule
ReTrace-MLE & $99.1 \pm 2.6$ & $0.05 \pm 0.03$ & $0.1 \pm 0.07$ \\
ReTrace-OLS & $93.6 \pm 12.8$ & $3.9 \pm 3.5$ & $5.0 \pm 9.7$ \\
ReTrace-EM & $98.3 \pm 3.5$ & $0.1 \pm 4.2$ & $11.4 \pm 10.1$ \\
MST-MLE & $22.1 \pm 15.1$ & $3.1 \pm 3.9$ & $8.5 \pm 10.5$ \\
DPT-MLE & $4.8 \pm 7.0$   & $5.0 \pm 4.2$ & $11.4 \pm 11.6$ \\
\bottomrule
\end{tabular*}
\caption{Comparison under the base setting between our method (Our + MLE) and baseline methods with regard to A/H MAEs, Sorting Accuracy, and Iteration Runtime. Higher means better for Sorting Accuracy, whereas for the others, it is the opposite. Summary of performance metrics: mean $\pm$ standard deviation (rounded to 1 decimal place). All values are absolute.}
\label{tab:results_summary}
\end{table}

\subsubsection{With Observation Noises}
The observation noise which is due to noisy measurements further corrupt the data and affect parameter learning \cite{sun2025robustparameterestimationdynamical, NIPS2012_5c936263}. The observation noise model is defined as:
\begin{equation}\label{noisy_measurement_sigma}
    \mathbf{Y}_t = \mathbf{X}_t + \boldsymbol{\epsilon}_t, \hspace{0.15cm}
    \boldsymbol{\epsilon}_t \sim \mathcal{N}(\mathbf{0}, \hspace{0.05cm} \mathbf{R}),
\end{equation}
where $\mathbf{R} = \sigma_\epsilon^2\mathbf{I}$.
We compare the methods under this challenging scenarios. Note that the measurement noise here is different than the white noise driving our SDE. We generate data that are corrupted by zero mean Gaussian noise at each time step. Because both the state and the noise are Gaussian, the observed data at a single time-slice are still Gaussian, so we can still use our proposed Algorithm \ref{main_algorithm}, with some small modifications:
\begin{align*}
    \hat{\mathbf{S}}_t &= \frac{1}{N-1}\sum_{j=1}^{N}(\mathbf{Y}^{(j)}_t - \hat{\boldsymbol{\mu}}_t)(\mathbf{Y}^{(j)}_t - \hat{\boldsymbol{\mu}}_t)^{\top},\\
    \hat{\boldsymbol{\Sigma}}_t &= \hat{\mathbf{S}}_t - \mathbf{R},
\end{align*}
where $\hat{\mathbf{S}}_t$ is the sample covariance of $\mathbf{Y}_t$ instead of $\mathbf{X}_t$, we use the same mean $\hat{\boldsymbol{\mu}}_t$ since the measurement noise $\boldsymbol{\epsilon}_t$ is zero-mean.

\begin{figure}[ht]
  \centering
  \begin{subfigure}[t]{\columnwidth}
    \centering
    \includegraphics[width=\linewidth]{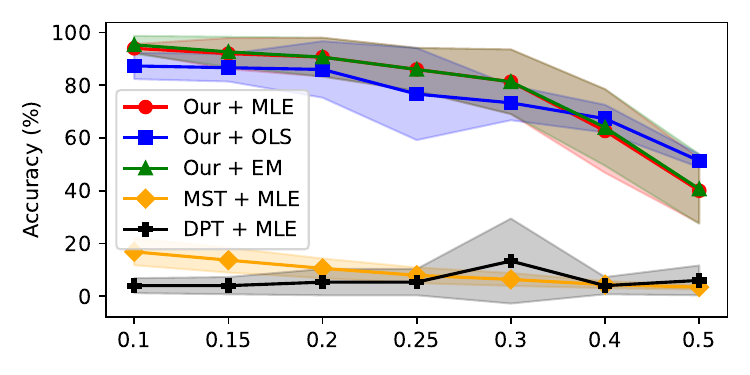}
    \caption{Reordering Accuracy}
    \label{fig:figure3}
  \end{subfigure}

  \begin{subfigure}[t]{\columnwidth}
    \centering
    \includegraphics[width=\linewidth]{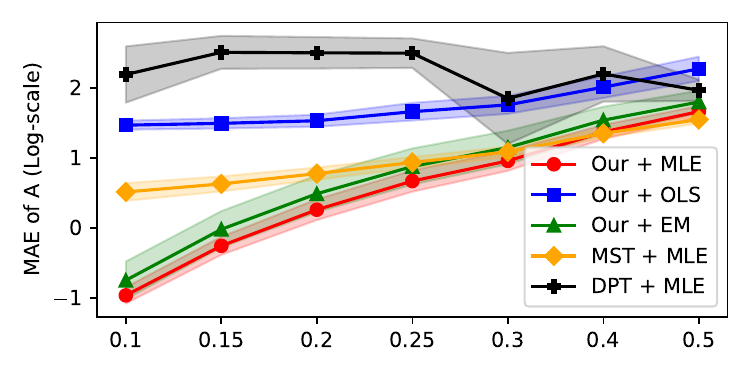}
    \caption{Mean Absolute Error of drift $\mathbf{A}$ }
    \label{fig:figure1}
  \end{subfigure}

  \caption{Performance comparisons between ReTrace ~and baseline methods on (a) Reordering accuracy, and (b) MAEs of drift parameter A under increasing observation noises. The x-axis is the noise percentage.}
  \label{noisy_figures}
\end{figure}

\subsubsection{Results Analysis}
We present the results of our experiments here. Then we discuss how it showcases our method's robustness and effectiveness compared to the baselines.

For this noisy measurements setting, we also use some fixed parameters such as $d = 10$, num\_trajectories $= 2000$, num\_steps $= 50$. The parameter noisy\_measurements\_sigma ranges from $0.1$ to $0.5$, which is $\sigma$ from equation \ref{noisy_measurement_sigma}.

Figure 2 illustrates the impact of increasing observation noise on both temporal order reconstruction accuracy and drift parameter estimation error (MAE of $\mathbf{A}$). As the noise level rises gradually from 0.1 to 0.5, all methods experience degradation in performance. However, our proposed method (ReTrace) consistently outperforms baseline approaches, maintaining substantially higher reordering accuracy and lower parameter estimation errors across all noise levels. This indicates that our score-based sorting procedure is far less sensitive to measurement corruption compared to graph-based and diffusion-pseudotime methods.

\subsection{Pharmacological Data Experiments}
\label{sec:sde_time_direction_benchmark}

Synthetic datasets are commonly used in benchmarking longitudinal treatment effect (LTE) methods, since counterfactual outcomes are unobservable in real datasets~\cite{lim2018forecasting, bica2020estimating}. Following this standard, we construct a testbed using synthetic trajectories generated from stochastic pharmacological models~\cite{kacprzyk2024ode}.

\paragraph{Synthetic Data Generation.}
We generate data from different stochastic pharmacological models $\mathcal{F}$ by varying the noise, static covariates, and parametric parameter distributions instantiated from the \textbf{stochastic tumor growth PKPD SDE}:
\begin{align}\label{eq:pkpd}
    dX_t = & \left[\rho\log\left(\frac{K}{X_t}\right) - \beta_c C(t) \right. \nonumber \\
    & \left. - \left(\alpha_r d(t) + \beta_r d(t)^2\right)\right] X_t\,dt + \sigma_{\mathrm{tumor}}\,dW_t
\end{align}
with continuous chemotherapy $C(t)$, binary radiotherapy $d(t)$, and subject-specific parameters ($\rho, K, \beta_c, \alpha_r, \beta_r, \sigma_{\mathrm{tumor}}$).

\paragraph{Discrete-Time Transition Model.}
The tumor growth dynamics are simulated by discretizing the univariate PKPD SDE via the Euler--Maruyama scheme. At each time step, the tumor volume $X_t$ evolves according to
\[
    X_{t+1} = X_t + b(X_t, C_t, d_t)\,\Delta t + \sigma_{\mathrm{tumor}}\,\Delta W_t,
\]
where $\Delta W_t \sim \mathcal{N}(0, \Delta t)$, $C_t$ is the administered chemotherapy dosage (continuous), and $d_t$ is the radiotherapy indicator (binary). The drift term is specified as
\[
    b(X_t, C_t, d_t) = \left(\rho\log\frac{K}{X_t} - \beta_c C_t - \left(\alpha_r d_t + \beta_r d_t^2\right)\right) X_t
\]
with all parameters patient-specific. The diffusion coefficient $\sigma_{\mathrm{tumor}}$ encapsulates intrinsic biological noise as well as additional variability due to unmodeled patient-specific effects and parameter heterogeneity. This leads to the following conditional transition distribution:
\[
    X_{t+1} \mid X_t, C_t, d_t \sim \mathcal{N}\left(X_t + b(X_t, C_t, d_t)\,\Delta t,\sigma_{\mathrm{tumor}}^2\,\Delta t\right)
\]
which jointly models the deterministic effects of clinical interventions and stochastic, patient-dependent evolution of tumor size.

We simulate $N = 5000$ patient-specific tumor growth trajectories, see Fig.~\ref{fig:tumor_traj}, governed by a stochastic pharmacokinetic/pharmacodynamic (PKPD) SDE model in Eq.~\eqref{eq:pkpd}
where $X_t$ is the tumor volume at time $t$, $C(t)\in [0, \mathtt{max\_chemo}]$ is the continuous chemotherapy dose, and $d(t)\in \{0,1\}$ denotes binary radiotherapy. Subject-specific parameters $(\rho, K, \beta_c, \alpha_r, \beta_r, \sigma_{\mathrm{tumor}})$ are sampled from log-normal distributions with 5\% between-subject variability. The initial tumor diameter is drawn uniformly from $[13, 15]$\,mm and converted to volume. Treatment policies depend on current tumor volume via a sigmoid function with confounding strength $\gamma=2.0$. Trajectories are simulated using the Euler--Maruyama method with $T=15$, $n=60$ time steps, and additive observational noise $\sigma_{\mathrm{obs}}=0.01$. Both fully-treated and untreated trajectories are generated to evaluate robustness across interventional conditions.
\begin{figure}[!ht]
    \centering
    \includegraphics[width=0.95\linewidth]{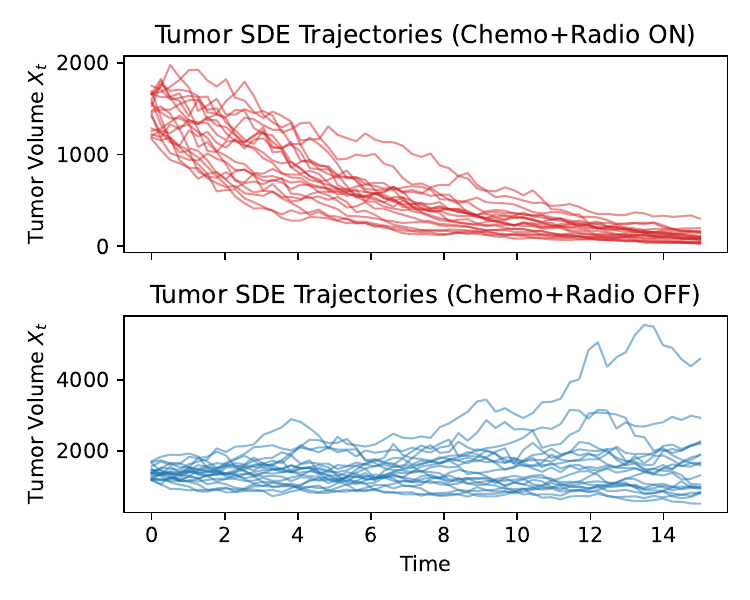}
    \caption{Sample tumor growth trajectories generated from the stochastic PKPD SDE model. (\textbf{Top}) Treated group with chemotherapy and radiotherapy continuously administered. (\textbf{Bottom}) Untreated group with no interventions. Each curve represents an independent patient sampled from the SDE in Eq.~\ref{eq:pkpd}.}
    \label{fig:tumor_traj}
\end{figure}

\paragraph{Trajectory Construction.}
For each synthetic subject, we simulate a discrete trajectory of $T$ steps, sampling initial conditions, covariates, and treatment sequences. Both factual and counterfactual outcome paths are simulated for each subject by altering the intervention sequence.

\begin{figure}[ht]
  \centering
  \includegraphics[width=\columnwidth]{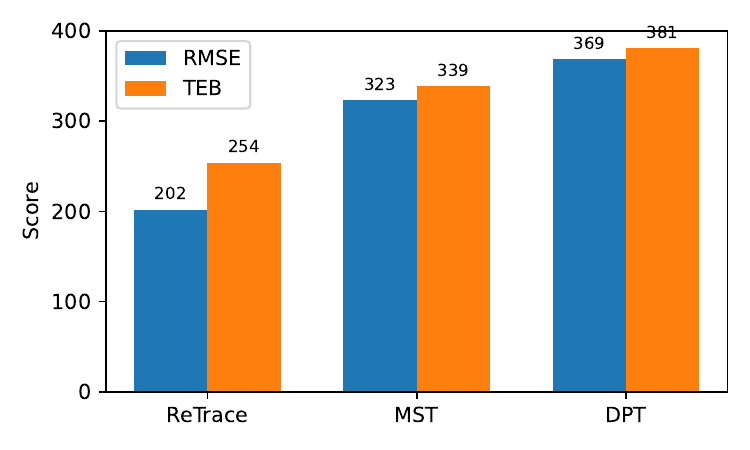}
  \caption{Root Mean Squared Error (RMSE) and Treatment Effect Bias (TEB), computed using~\eqref{eq:rmse} and~\eqref{eq:teb}.}
  \label{fig:rmse-teb}
\end{figure}

\paragraph{Direction-of-Time Ambiguity and Evaluation.}
To rigorously test the ability to recover temporal order,
\begin{itemize}
    \item For a subset of samples, the true time order of states $(X_1 \to X_2 \to \ldots \to X_T)$ is withheld, and all possible permutations are considered.
    \item The task is to infer, via SDE-based likelihoods, which ordering best fits the observed data under the learned dynamics.
\end{itemize}

\paragraph{Assumptions for SDE-based Treatment Effect Estimation.}

We model patient-level tumor dynamics under treatment and control as stochastic differential equations (SDEs) of the form
\begin{equation}
    dX_t = b(X_t, a)\,dt + G\,dW_t,
\end{equation}
where $X_t \in \mathbb{R}$ denotes the tumor volume at time $t$, $a \in \{0,1\}$ encodes treatment assignment ($a=1$ for treated, $a=0$ for control), $b(\cdot, a)$ is a Lipschitz continuous drift function capturing the deterministic effect of treatment and tumor progression, and $W_t$ is a standard Wiener process. To realistically model between-subject variability (BSV) arising from intrinsic heterogeneity and unmeasured confounders, we incorporate additive process noise via the Wiener process, with diffusion coefficient $G$ (potentially patient-specific) encoding both measurement noise and subject-level random effects.

\paragraph{Counterfactual Treatment Effect Estimation via SDEs.}

Given observed data under treatment ($a=1$) or control ($a=0$), we infer SDE parameters for each regime using maximum likelihood or score-based estimation, accommodating unordered or irregularly sampled patient trajectories. For counterfactual inference, we simulate potential outcomes by numerically solving the SDE for both $a=1$ (treatment) and $a=0$ (control) for each patient, conditioned on their baseline covariates and initial tumor volume. The individualized treatment effect (ITE) for patient $i$ is estimated as the difference in expected tumor volume at time $t^*$ under treatment versus control:
\begin{equation}
    \text{ITE}_i = \mathbb{E}[X_{t^*} \mid a=1] - \mathbb{E}[X_{t^*} \mid a=0].
\end{equation}
The average treatment effect (ATE) is computed by averaging the ITEs across the cohort.

\paragraph{Evaluation Metrics.}
We assess model performance on two axes:
\begin{itemize}
    \item \textbf{Treatment Effect Bias:} The mean bias in ITE/ATE is quantified as the average difference between the inferred treatment effects (using the estimated temporal order and SDE parameters) and the ground-truth effects (using the true order and known parameters). Formally, for $N$ patients,
    \begin{equation}\label{eq:teb}
        \text{Treatment Effect Bias} = \frac{1}{N} \sum_{i=1}^N \left[ \widehat{\text{ITE}}_i - \text{ITE}_i^{\mathrm{(true)}} \right].
    \end{equation}
    \item \textbf{Counterfactual Predictive Error:} The accuracy of counterfactual predictions is measured by the root mean squared error (RMSE) between the predicted and ground-truth counterfactual tumor volumes at $t^*$:
    \begin{equation}\label{eq:rmse}
    \begin{split}
        &\text{Counterfactual RMSE} = \\
        &\quad \sqrt{ \frac{1}{N} \sum_{i=1}^N \left( \widehat{X}_{i, t^*}^{\mathrm{(cf)}} - X_{i, t^*}^{\mathrm{(cf, true)}} \right)^2 }.
    \end{split}
    \end{equation}
\end{itemize}

This framework enables rigorous quantification of both the causal effect of treatment and the reliability of counterfactual predictions under model- and data-induced uncertainty, leveraging SDE-based dynamical models and accounting for realistic subject-level heterogeneity via process noise.

\section{Conclusion}
We propose ReTrace, a score-based framework for recovering temporal order and estimating SDE parameters from unordered data. ReTrace reconstructs trajectories via pairwise score minimization and enables consistent parameter inference for counterfactual and treatment effect estimation. Experiments on synthetic and real data validate its effectiveness in incomplete or privacy-sensitive temporal settings, which opens up many research directions or techniques in such extreme scenarios.

\section{Acknowledgments}
The authors wish to thank the anonymous AAAI reviewers for their insightful feedback and constructive comments, which significantly improved the quality of this paper. We also acknowledge the support of Deakin Applied Artificial Intelligence Initiative (A\textsuperscript{2}I\textsuperscript{2}).

I also wish to thank my family and friends, especially Chloe, for their constant support and belief in me.

\bibliography{aaai2026}
\end{document}